\title{Slaves to the Law of Large Numbers: An Asymptotic Equipartition Property for Perplexity in Generative Language Models}
\author{\name Tyler Bell \email tyler-bell@uiowa.edu \\
\AND
\name Avinash Mudireddy \email avinashreddy-mudireddy@uiowa.edu \\
\AND
\name Ivan Johnson-Eversoll \email ivan-johnson@uiowa.edu \\
\AND
\name Soura Dasgupta \email soura-dasgupta@uiowa.edu \\
\AND
\name Raghu Mudumbai \email rmudumbai@engineering.uiowa.edu \\
      \addr Department of Electrical \& Computer Engineeging \\
      University of Iowa, Iowa City, USA
      }
\theoremstyle{plain}
\newtheorem{proposition}{Proposition}[section]
\newtheorem{lemma}{Lemma}[section]
\theoremstyle{definition}
\newtheorem{definition}{Definition}[section]
\newtheorem{assumption}{Assumption}[section]
\theoremstyle{remark}
\DeclareMathOperator*{\E}{\mathlarger{E}}
\newcommand{\bA}{{\mathbf A}}
\newcommand{\bB}{{\mathbf B}}
\newcommand{\bY}{{\mathbf Y}}
\newcommand{\bX}{{\mathbf X}}
\newcommand{\bZ}{{\mathbf Z}}
\newcommand{\bS}{{\mathbf S}}
\newcommand{\mcV}{\mathcal V}
\newcommand{\mcT}{\mathcal T}
\newcommand{\mcP}{\mathcal P}
\newcommand{\mcY}{\mathcal Y}
\newcommand{\mcF}{\mathcal F}
\newcommand{\mcG}{\mathcal G}
\newcommand{\mcE}{\mathcal E}
\begin{document}

\maketitle

\begin{abstract}

We prove a new asymptotic unequipartition property for the perplexity of long texts generated by a language model and present supporting experimental evidence from open-source models. Specifically we show that the logarithmic perplexity of any large text generated by a language model must asymptotically converge to the average entropy of its token distributions. This defines a ``typical set'' that all long synthetic texts generated by a language model must belong to. We refine the concept of ''typical set'' to include only grammatically correct texts. We then show that this refined typical set is a vanishingly small subset of all possible grammatically correct texts for a very general definition of grammar. This means that language models are strongly constrained in the range of their possible behaviors and outputs. We make no simplifying assumptions (such as stationarity) about the statistics of language model outputs, and therefore our results are directly applicable to practical real-world models without any approximations. We discuss possible applications of the typical set concept to problems such as detecting synthetic texts and membership inference in training datasets.

 
\end{abstract}

\section{Introduction and Background.}

Consider a generative model, defined as an algorithm that takes a user input $\bX$ and produces an output $\bY$ that statistically resembles data from a natural source. A specific type of generative model is a large language model (LLM) whose output $\bY$ is a body of text and the input is a text user prompt $\bX$. State-of-the-art LLMs \citep{claude2023anthropic, openai2024chatgpt4o} are now able to produce detailed and information-rich text outputs such as entire screenplays and book-length manuscripts from a short and simple user prompt. They can also pass the Turing test \citep{pinar2000turing} i.e. imitate human-written text well enough to be convincing to human observers. In this work, we argue that an LLM is strongly constrained by statistical laws. Specifically, we show that the logarithmic {\it perplexity} of any large text produced by a language model must asymptotically converge to the average entropy of its token distributions. This means that any language model is constrained to only output text strings from a {\it typical set}, which we show, is an exponentially vanishing subset of all possible grammatically correct strings. This involves a generalization of the well-known Asymptotic Equipartition Theorem (AEP) \citep{cover_aep} from information theory and refining it for language models to account for the concept of grammatical correctness.

This work touches on ideas such as entropy, equipartition theorems and computational linguistics/natural language processing (CL/NLP), each of which boast of a vast body of previous work. We now present a brief literature survey to place our work in context.


\subsection{Equipartition Theorems}

The simplest version of the AEP states that a long sequence of independent and identically distributed (iid) random symbols is likely to be a {\it typical sequence} \citep{method_of_types} defined by the property that the empirical distribution of the occurrence of different symbols within the sequence is very close to the distribution from which the symbols were drawn. The AEP itself can be thought of as a consequence of the Law of Large Numbers (LLN) \citep{lln} that asserts the convergence of a log-likelihood random variable to a limiting ``entropy rate'' \citep{cover_sandwich}. 

A weak AEP for iid sequences with a finite alphabet was originally introduced by \cite{shannon1948mathematical} followed by strong versions for stationary, ergodic sequences by \cite{mcmillan1953basic} and \cite{breiman1957individual}, extensions to infinite alphabets by \cite{chung1961note}, and continuous-valued sequences in \cite{barron1985strong}. Thus, the AEP is also commonly referred to as the Shannon-McMillan-Breiman (SMB) Theorem \cite{hamdan2008shannon}. Generalizations of the SMB Theorem \citep{moy1961generalizations} in the form of ergodic theorems are now available for many classes of stationary sequences \citep{pollicott1998dynamical}.

Extending the AEP for iid sequences to a non-stationary sequence of independent, but non-identically distributed symbols is straightforward \citep{tang2024science}. AEPs for word-valued functions of stationary sequences are considered in \citep{aep_word_valued,aep_word_valued2}. However, beyond this, the literature on the AEP for non-stationary sequences is much more limited. Closest to our work is the analysis in \cite{yang2004asymptotic} and \cite{wang2016generalized} that consider a very general class of non-homogeneous Markov processes, but assume the existence of a unique limiting distribution for the Markov process to establish a constant entropy rate.

Our key point of departure is to not require equal partitions: without stationarity or similarly restrictive assumptions on the statistics of the process, the likelihoods of long sequences do not all become asymptotically equal. Crucially, we show that the the typical sets defined by the resulting Un-Equipartition Property still retain certain essential properties that make the typical set concept so powerful for stationary processes.

\subsection{Computational Models of Language}

Statistical analysis of languages in its modern form dates back to the work of \citep{Shannon1951} on the entropy of written English. These statistical ideas competed \citep{Chomsky1959} with structural theories of language \citep{chomsky2014aspects} in the early days of modern linguistics. Early methods such as n-gram models \citep{ManningSchuetze} evolved into more mathematically sophisticated methods such as Hidden Markov Models (HMMs) \citep{Rabiner}. The statistical approach has now become dominant after the recent introduction of neural network architectures \citep{Bengio}, culminating in the development of powerful large language models (LLMs) based on transformer networks \citep{Vaswani}. While we know that LLMs can generate complex and coherent text outputs, their representation of language structures and grammar remains a topic of research \cite{reizinger2024position, meszaros2024rule}.

Perplexity, defined as an inverse likelihood function, is widely used as a performance metric for training language models \citep{meister2021language}. It is closely related to the information-theoretic concepts of {\it surprisal} \citep{LEVY2008surprisal} and entropy \citep{cover_entropy}, and it also appears to capture linguistic \citep{miaschi2021perplexed, gamallo2017perplexity} and cognitive \citep{DEMBERG2008cognitive, cohen-pakhomov-2020-tale} phenomena at least partially. Many ``AI detection'' tools are based on observed differences between the perplexity of synthetic and natural text \citep{detectGPT, gehrmann2019gltr}.

\subsection{Contribution: An Un-Equipartition Property for Unnatural Language}

Our main contributions are (a) to formulate and provide theoretical and experimental justification for a version of the AEP that applies to real-world practical LLMs without any approximations, and (b) use this AEP to define a concept of typical set specifically adapted for language models. While our focus is on the theoretical development, we also briefly discuss possible applications to practical problems such as AI text detection and LLM dataset inference.

Unlike computational linguistics, we do not study natural language; instead our exclusive focus is on \emph{unnatural language} i.e. synthetic text created by LMMs. Natural languages are shaped by complex social, psychological and biological processes \cite{pierce1968languages} and they can only ever be approximated by computational models \cite{bar1962four}. In contrast, LLMs are machines whose internals, while complex, are in theory completely knowable. 

Thus, it is possible for us to derive \emph{exact} laws that govern the outputs of LLMs in a way that is not possible to do for natural language. However, this requires us to be disciplined about not making any assumptions or approximations about the statistics of LLM outputs. 

\subsection{Notation}

A sequence of symbols is denoted by boldface, uppercase identifiers e.g. $\bA,~\bB$. These can be fixed, constant sequences or random sequences. The vector consisting of the first $N$ elements of a sequence $\bA$ is denoted by $\bA_N \equiv [A_1,~A_2,~\dots,~A_N]$. All random variables in this work take discrete-values from a finite set. The probability of an event $E$ is $\Pr(E)$, the distribution or probability mass function of a random variable $A_m$ is $p(A_m)$. Sets are denoted by calligraphic symbols e.g. $\mathcal{A}$. The binary entropy of a distribution $p(A_m)$ is $H(p(A_m))$\footnote{We will always speak of the entropy of distributions and avoid speaking of the entropy of random variables.}. The symbol $\equiv$ denotes a simple identity while $=$ denotes a mathematical inference and $\doteq$ a definition.





 \label{sec:intro}

\section{Definitions and Problem Statement} \label{sec:theory}

Let M be a generative model whose output $\bY$ consisting of a sequence of tokens $\bY = [Y_1,~Y_2,~\dots Y_N~\dots]$, chosen from a finite token dictionary $Y_n \in \mcY$, is a stochastic function of user prompt $\bX$. The model M is defined by a probability distribution $p(\bY | \bX)$, or more formally, a sequence of probability distributions $p(\bY_N | \bX)$ over $\bY_N \in \mcY^N$ where $\bY_N \doteq [Y_1,~Y_2,~\dots Y_N]$ is the substring of $\bY$ consisting of the first $N$ tokens. Practical implementations of LLMs specify the probability distribution iteratively \citep{radford2018improving}:
\begin{align}
p(Y_1,Y_2 | \bX) &= p(Y_1 | \bX) p(Y_2 | Y_1, \bX)
\end{align}
and so on. Thus, the model M first draws a random value for the first token say $Y_1 = y_1$ by sampling from the distribution
$p(Y_1 | \bX)$. Then the model determines a distribution for the second token as a function of the initial prompt $\bX$ and the
randomly chosen first token $y_1$. Thus, the second token is randomly sampled from a distribution $p(Y_2|\bX, Y_1=y_1)$ and 
so on. We can write
\begin{align}
p &(\bY_N | \bX) = p_1(Y_1) p_2(Y_2) \dots p_N(Y_N),~\mathrm{where}~p_n(Y_n) = p(Y_n | Y_1,~Y_2,~Y_{n-1}, \bX) \label{eq:pdffactor}
\end{align}
Given a string $\bY$, open-source LLMs can be programmed to print out the distributions $p_n(Y_n)$ from which its tokens were selected. Specifically, given a user prompt $\bX$ and a string of tokens $\bY_N \equiv [Y_1,~Y_2,~\dots,~Y_N]$, it is possible to get a complete listing of the distributions $p_n(Y_n),~n=1 \dots N$. 

{\bf Remark.} Equation (\ref{eq:pdffactor}) is simply an application of the Bayes rule of probability theory and it always holds for any generative model regardless of whether the tokens $Y_n$ are sequentially generated. However, the conditional distributions $p_n(y)$ are not in general easily accessible, so while (\ref{eq:pdffactor}) is true for all generative models, it may only be {\it useful} for sequential models.

The perplexity $\mathrm{perp}_M(\bY_N)\doteq \prod_{n=1}^N p_n(Y_n)^{-\frac{1}{N}}$ of a text string $\bY_N = [Y_1,~Y_2,~\dots Y_N]$ for a model M is defined as the per-token inverse likelihood of the string $\bY$. It is usually more convenient to work with the log-perplexity $l_M(\bY_N)$:
\begin{align}
l_M(\bY_N) &\doteq \log_2 \left( \mathrm{perp}_M(\bY_N) \right) \equiv -\frac{1}{N} \sum_{n=1}^N \log_2 (p_n(Y_n)) \label{eq:logperp}
\end{align}


\subsection{A Toy Problem} \label{sec:toy}

Let $\alpha(y),~\beta(y)$ be two fixed probability distributions over the (discrete) set $\mcY$ of tokens. Consider a toy problem involving two language models A and B, that each generate a string of tokens $\bY=[Y_1,~Y_2,~\dots]$ where each token is generated iid from the distribution $\alpha(y)$ and $\beta(y)$ respectively. The iid assumption implies that the tokens $Y_n$ can be thought of as being generated by a stationary and ergodic random process\footnote{Strictly speaking, a stationary random process ``starts" at $n=-\infty$ rather than $n=1$; here we ignore the tokens produced by the assumed random process before $n=1$.}.

Consider a long string $\bA_N \doteq [A_1,~A_2,~\dots,~A_N],~N \gg 1$ randomly generated from model A. Let $p_\bA(y)$ denote the empirical distribution of $y \in \mcY$ in the string $\bA_N$:
\begin{align}
p_\bA(y) & \doteq \frac{n_\bA(y)}{N} \equiv \frac{1}{N} \sum_{n=1}^N \mathbf{1}(A_n \equiv y),~\forall y \in \mcY
\end{align}
where $\mathbf{1}(.)$ denotes the indicator function and $n_\bA(y)$ is the number of occurrences of token $y$ in the (long) string $\bA_N$. The log-perplexity of string $\bA_N$ for model A is:
\begin{align}
l_A(\bA_N) &= -\frac{1}{N} \sum_{n=1}^N \log_2 (\alpha(Y_n)) 
\equiv - \sum_{y \in \mcY} \frac{n_\bA(y)}{N}\log_2 (\alpha(y)) \equiv H(p_\bA, \alpha) \label{eq:lpa}
\end{align}
where $H(\gamma_1, \gamma_2) \doteq - \sum_{y \in \mcY} \gamma_1(y) \log_2(\gamma_2(y))$ is the cross-entropy between distributions $\gamma_1(y),~\gamma_2(y)$ over $y \in \mcY$. It is well-known $H(\gamma_1, \gamma_2) \geq H(\gamma_1)$ with equality when $\gamma_1 \equiv \gamma_2$, where $H(\gamma) \doteq - \sum_{y \in \mcY} \gamma(y) \log_2(\gamma(y))$ is the entropy of distribution $\gamma$. Thus we have from (\ref{eq:lpa}):
\begin{align}
l_A(\bA_N) \equiv H(p_\bA, \alpha) &\geq H(p_\bA) 
{\mathrm{with~equality~iff~}} p_\bA(y) \equiv \alpha(y),~\forall y \in \mcY \label{eq:lpa2}
\end{align}

The simplest version of the classical Asymptotic Equipartition Theorem (AEP) \citep{cover_aep} from information theory states that the log-perplexity of a long string $\bA_N$ of iid symbols is almost always very close to the entropy $H(\alpha)$ of the distribution $\alpha(y)$ the symbols are drawn from.
\begin{proposition} \label{prop_classical_aep1}
{\bf A Simple Cross-AEP.} For a long text string $\bA_N = [A_1,~A_2,~\dots,~A_N]$ of iid tokens $A_n$ drawn from a distribution $\alpha(y)$, the log perplexity $l_B(\bA_N)$ for a model B as defined in (\ref{eq:logperp}) almost always approaches the cross-entropy of the distributions $H(\alpha, \beta)$:
\begin{align}
\lim_{N \to \infty} l_B(\bA_N) &\equiv H(\alpha, \beta),~\mathrm{and}~\lim_{N \to \infty} l_A(\bA_N) \equiv H(\alpha) \label{eq:lln2}
\end{align}
\end{proposition}

From (\ref{eq:lpa}) and (\ref{eq:lpa2}), we see that (\ref{eq:lln2}) itself is equivalent to 
$p_\bA(y) \equiv \frac{n_\bA(y)}{N} \rightarrow \alpha(y),~\forall y \in \mcY$ i.e. the empirical distribution $p_\bA(y)$ for long strings $\bA$ almost always converges to $\alpha(y)$. Putting these observations together we have:
\begin{align}
\lim_{N \rightarrow \infty} l_A(\bA_N) &\equiv H(\alpha) \leq  H(\alpha, \beta) \equiv \lim_{N \rightarrow \infty} l_B(\bA_N) \label{eq:optperp}
\end{align}

Intuitively, (\ref{eq:optperp}) states that a long string $\bA_N$ generated from model A is likely to have a lower perplexity for model A than for any other model B. This means that a model trained to have low perplexity over a set of reference texts will generate text strings that are statistically similar to the reference texts. This is the theoretical justification for using perplexity as a loss function for training language models and it is an excellent justification - provided only that we accept the Shannon model of language i.e. the idea that languages can be reasonably modeled as a stochastic sequence of tokens \citep{jakobson1961linguistics}. Testing for abnormally high or low perplexity is an important tool for LLM applications as we discuss in Section \ref{sec:app}.




\subsection{A Modest Generalization}

The simple AEP in Proposition \ref{prop_classical_aep1} has been extended in various ways in the literature such as the following version (see \cite{aep_nonstationary}, Appendix B.4), which we will make use of in the sequel.
\begin{proposition} \label{prop_classical_aep2}
{\bf Generalized AEP.} Consider a language model A' that generates text string $\bA'_N = [A'_1,~A'_2,~\dots,~A'_N]$ where the tokens $A'_n$ are drawn independently from a sequence of distributions $\alpha_n(y),~n=1,2,\dots$. Assuming the entropies of distributions $\alpha_n(y)$ are uniformly bounded i.e. $H(\alpha_n(y)) < H_{max} < \infty,~\forall n$, the log perplexity $l_{A'}(\bA'_N)$ of the string $\bA'_N$ as defined in (\ref{eq:logperp}) is close to the average entropy of the distributions $\alpha_n(y),~n=1,2 , \dots, N$ with high probability:
\begin{align}
&\lim_{N \rightarrow \infty} \Pr[ | l_{A'}(\bA'_N) - h_{A'}(N)| > \epsilon ] \equiv 0,~\forall \epsilon > 0, \mathrm{where}~h_{A'}(N) \equiv \frac{1}{N} \sum_{n=1}^N H(\alpha_n)
\end{align}
\end{proposition}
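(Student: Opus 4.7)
The plan is to recognize Proposition \ref{prop_classical_aep2} as nothing more than a weak law of large numbers for independent, non-identically distributed summands. I would first define $Z_n \doteq -\log_2 \alpha_n(A'_n)$ for $n = 1,\dots,N$, so that by (\ref{eq:logperp}) the log-perplexity can be rewritten as $l_{A'}(\bA'_N) = \frac{1}{N}\sum_{n=1}^N Z_n$. A direct calculation gives $E[Z_n] = -\sum_{y \in \mcT} \alpha_n(y) \log_2 \alpha_n(y) = H(\alpha_n)$, so $H_{A'}(N) = \frac{1}{N}\sum_{n=1}^N E[Z_n]$. Because the $A'_n$ are drawn independently by hypothesis, the $Z_n$ are mutually independent as well.

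The only nontrivial ingredient needed for Chebyshev's inequality is a uniform bound on $\mathrm{Var}(Z_n)$. I would argue that $\mathrm{Var}(Z_n) \leq E[Z_n^2] = \sum_{y \in \mcT} \alpha_n(y) (\log_2 \alpha_n(y))^2$, and observe that the function $f(x) = x (\log_2 x)^2$ is bounded on $[0,1]$ by an explicit constant $C$ (it vanishes at $0$ and $1$ and elementary calculus locates a finite interior maximum). Since $\mcT$ is finite by the assumption of Section \ref{sec:theory}, this yields $\mathrm{Var}(Z_n) \leq |\mcT| \cdot C$ uniformly in $n$; call this bound $V_{\max}$. It is worth noting that the stated hypothesis $H(\alpha_n) < H_{\max}$ is in fact automatic under a finite alphabet via $H(\alpha_n) \leq \log_2 |\mcT|$, but keeping it explicit in the statement flags the property that will remain essential when the result is later extended to dependent tokens or countable alphabets.

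Combining independence with the variance bound gives $\mathrm{Var}\bigl( l_{A'}(\bA'_N) \bigr) = \frac{1}{N^2}\sum_{n=1}^N \mathrm{Var}(Z_n) \leq V_{\max}/N$, and Chebyshev's inequality applied to the zero-mean variable $l_{A'}(\bA'_N) - H_{A'}(N)$ yields
\begin{align}
\Pr\bigl[\, |l_{A'}(\bA'_N) - H_{A'}(N)| > \epsilon \,\bigr] \leq \frac{V_{\max}}{N \epsilon^2} \longrightarrow 0 \quad \text{as } N \to \infty,
\end{align}
which is precisely the claim. There is no serious obstacle in this proof — the uniform variance bound is the only step that uses anything beyond the textbook WLLN, and it follows cleanly from the finite-alphabet assumption. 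The substantive content of Proposition \ref{prop_classical_aep2} relative to Proposition \ref{prop_classical_aep1} is purely the removal of the identically-distributed assumption; dropping the \emph{independence} assumption to accommodate genuine sequential LLMs (where $p_n$ depends on $Y_1,\dots,Y_{n-1}$) is the conceptually harder step and presumably requires a martingale-difference or conditional-variance argument that lies beyond this proposition.
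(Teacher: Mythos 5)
Your proof is correct and is essentially the paper's approach: the paper presents this proposition as a direct consequence of the weak Law of Large Numbers applied to the log-perplexity random variable (deferring details to a cited reference), and your Chebyshev argument with a uniform per-token variance bound is exactly the mechanism the paper itself invokes later through the log-deviation $\lambda_M$ and inequality (\ref{eq:cheby}). Your side remark is also apt: bounded entropy alone would not bound the second moment of $-\log_2 \alpha_n(A'_n)$ on an infinite alphabet, but this is moot here since the paper assumes $\mcT$ finite, which is precisely what your uniform bound uses.
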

The regularity condition that the entropies of $\alpha_n(y)$ are uniformly bounded is trivially true if the token dictionary $\mcY$ is a finite set: $H(p(y)) \leq \log_2 |\mcY|$ for any distribution $p(y)$ over $y \in \mcY$.

While Proposition \ref{prop_classical_aep2} does not lend itself to an intuitive interpretation in terms of the empirical distribution of the tokens $A'_n$, it too is a direct consequence of the Law of Large Numbers applied to the log-perplexity random variable. Much of the analysis in Section \ref{sec:toy} can be extended to generalized models like $A'$ in Proposition \ref{prop_classical_aep2} that allows each token $A'_n$ to be drawn from different distributions $\alpha_n(y)$. However, the tokens in these models must be drawn from {\it fixed distributions independent of past tokens}. This is still quite trivial compared to modern language models where the output tokens depend in highly complex and sophisticated ways on past tokens.

%


\section{An Un-Equipartition Property for Perplexity} \label{sec:aep}

We now propose a generalization of the theory described in Section \ref{sec:toy} to a more interesting class of models. Consider a language model M and a random infinitely long text string $\bY$ generated by M, whose probabilities for a given prompt $\bX$ is described by (\ref{eq:pdffactor}). Specifically, given a model M and a string $\bY$, (\ref{eq:pdffactor}) defines a sequence of probability distributions $p_n(y)$ over the set of tokens $y \in \mcY$. We will assume that the prompt $\bX$ is fixed and omit it from our notation in the sequel.

\begin{definition} \label{def:typical}
The empirical entropy $h_M(\bY_N)$ of model M for string $\bY_N$ is defined as:
\begin{align}
h_M(\bY_N) \doteq \frac{1}{N} \sum_{n=1}^N H(p_n)
\end{align}
where $H(p) \equiv - \sum_{y \in \mcY} p(y) \log_2(p(y))$ is the entropy of the distribution $p(y)$ over $y \in \mcY$. 
\end{definition}
Note that we make no assumptions about the existence of an ``entropy rate'' $\lim_{N \rightarrow \infty} h_M(\bY_N)$ for the model $M$. However, we can show that 
\begin{align}
\E \left[ l_M(\bY_N) \right] &\equiv \E \left[ h_M (\bY_N) \right] \equiv \frac{1}{N} H \left( \Pr(\bY_N) \right) \label{eq:ehm}
\end{align}
where $H \left( \Pr(\bY_N) \right)$ is the entropy of the random text strings $\bY_N$.

\begin{definition}
Let the log-deviation $\lambda(p)$ of a probability distribution $p(y)$ over $y \in \mcY$ be defined as:
\begin{align}
\lambda(p) &\doteq \sqrt{S(p) - (H(p))^2},~
\mathrm{where}~S(p) \doteq \sum_{y \in \mcY} p(y) \left( \log_2 p(y) \right)^2 \label{eq:lambdadef}
\end{align}
\end{definition}
Note that the entropy $H(p)$ and log-deviation $\lambda(p)$ are the mean and standard deviation of the log-likelihood random variable $l_p(y) \equiv -\log_2 p(y)$ under distribution $p(y)$.

\begin{definition}
The log-deviation $\lambda_M(\bY_N)$ of a string $\bY_N$ for model $M$ is defined as:
\begin{align}
\lambda_M(\bY_N) &\doteq \frac{1}{N} \sqrt{\sum_{n=1}^N \lambda^2(p_n)} \label{eq:logdev}
\end{align}
\end{definition}

We can again interpret $H(p_n),~\lambda(p_n)$ as the {\it conditional} mean and standard deviation of $-\log_2(\Pr(Y_n))$ conditioned on $\bY_{N-1}$. Clearly, if the log-deviations $\lambda(p_n)$ of the distributions $p_n(y)$ for a string $\bY$ are uniformly upper-bounded, $\lambda_M(\bY_N)$ asymptotically vanishes. We are now ready to state our main result.

\begin{proposition} \label{prop_newaep}
{\bf Un-Equipartition Property for Perplexity.} For a long text string $\bY$ i.e. $N \gg 1$ generated from a language model with a given prompt $\bX$, with high probability, the log perplexity $l_M(\bY)$ is close to the empirical entropy $h_M(\bY)$. More precisely:
\begin{align}
\lim_{N \rightarrow \infty} \Pr[ | l_M(\bY_N) - h_M(\bY_N) | > \epsilon ] &\equiv 0,~\forall \epsilon > 0 \label{eq:newaep}
\end{align}
\end{proposition}

We want to argue that $\lambda_M(\bY_N)$ is the standard deviation of $l_M(\bY_N)-h_M(\bY_N)$, and therefore asymptotically vanishing $\lambda_M(\bY_N)$ implies $l_M(\bY_N) \rightarrow h_M(\bY_N)$. This, however, is very wrong - for one thing $\lambda_M(\bY_N)$ is itself a random variable. In the sequel, we show an informal argument to carry through reasoning like the above. But first we present a direct proof of Proposition \ref{prop_newaep}.


\subsection{Formal Proof: an LLN for LLMs} \label{sec:formal}

We will use a version of the Weak Law of Large Numbers that applies to sequences of random variables with no restrictions on their dependence. Such a Law can be obtained using martingale difference ideas and an elementary derivation is presented in Appendix \ref{sec:wlln}.
\begin{proof} [Proof of Proposition \ref{prop_newaep}.]
Consider the sequence of conditional likelihood random variables $\{l_n\}$  defined as:
\begin{align}
l_n &\doteq -\log_2 \Pr \left( Y_n | \bY_{n-1} \right) 
    \equiv -\log_2 \left( p_n(Y_n) \right) \label{eq:defln}  
\end{align}
We note that the random variables $Z_n \doteq l_n$ satisfy all of the conditions for Lemma \ref{zmlln}, if we impose the restriction that they are strictly bounded i.e. $|l_n| < L$, for some some finite $L$. This is equivalent to requiring that all non-zero token probabilities $p_n(y)$ are above some lower bound e.g. $p_n(y) \geq 10^{-10}$. In practice, this is easily satisfied by any practical LLM. Then from Lemma \ref{zmlln}, we have for any $\epsilon >0$:
\begin{align}
\lim_{N \rightarrow \infty} \Pr \left( \left| \frac{1}{N} \sum_{n=1}^N \left( l_n - H(p_n) \right) \right| > \epsilon \right) &= 0 \label{eq:lln_ln}
\end{align}
From the definition (\ref{eq:defln}), we note that $\frac{1}{N} \sum_{n=1}^N l_n \equiv l_M(\bY_N)$, and likewise $\frac{1}{N} \sum_{n=1}^N H(p_n) \equiv h_M(\bY_N)$. Combining these observations with (\ref{eq:lln_ln}) gives (\ref{eq:newaep}).
\end{proof}

\subsection{Informal Proof: Roads not Traveled} \label{sec:informal}

\begin{figure*}[h]
\centering
    \includegraphics[width=0.9\textwidth]{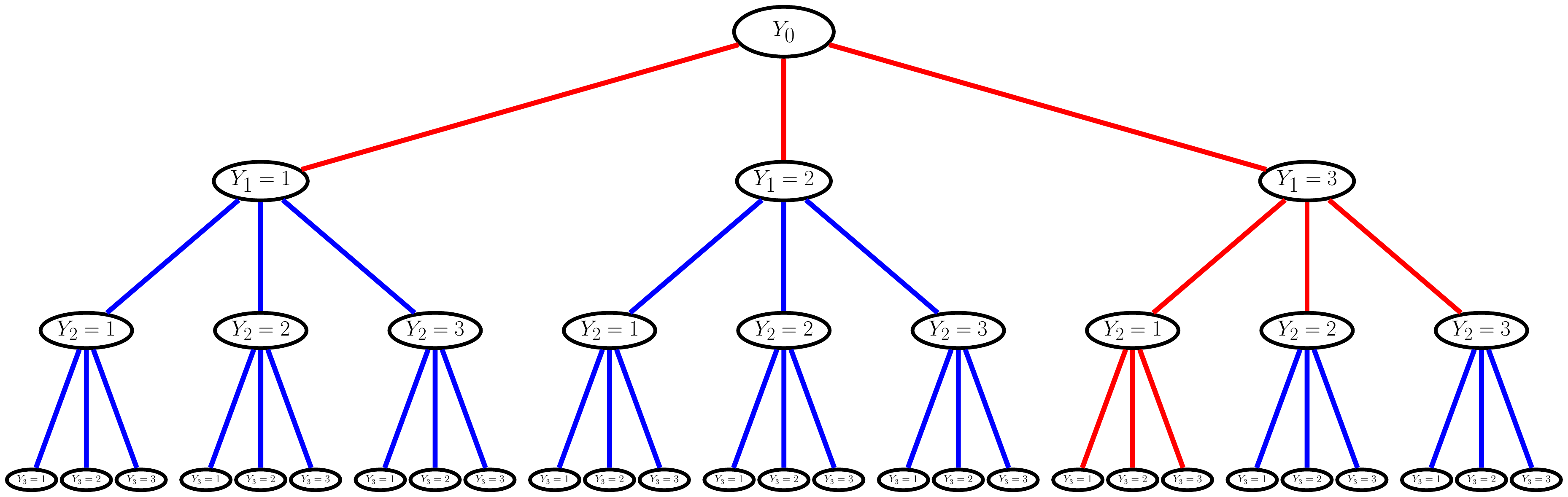}
    \caption{Probability tree for the first 3 tokens of a language model M that selects each token $Y_n$ from a dictionary $\mcY$ of three tokens. Also shown in red is the probability tree of an auxiliary model $M'_\bS$ for a string $\bS$ with $S_1=3,~S_2=1$ and so on.}
\label{fig:prob_tree} 
\end{figure*}

Consider a fixed string $\bS$. For this particular string $\bS$ and model M, we have a fixed set of distributions $q_n(y) \doteq \Pr(S_n = y| \bS_{n-1})$. Now define an auxiliary generative model $M'_\bS$ that generates a random string $\bY' \doteq [Y_1',~Y_2',~\dots,~Y_N',~\dots]$ where the tokens $Y_n'$ are generated {\it independently} from the distributions $q_n(y)$. If we represent a language model M as a probability tree, an auxiliary model $M'_\bS$ represents one specific subtree corresponding to the string $\bS$ that defines the model. This is illustrated in Fig. \ref{fig:prob_tree}. The auxiliary model $M'_\bS$ is much simpler than the general language model M, because its output tokens $Y'_n$ are generated independently of other tokens. Our informal proof of Proposition \ref{prop_newaep} is based on the following idea: while calculating $\Pr(E)$ for some event $E$, we can change any part of the probability tree that do not affect the event $E$ (``the roads not traveled'') to make our calculation easier e.g. by allowing us to work with an auxiliary language model $M'_\bS$ which generates strings of independent tokens.

\begin{proof} [Informal Proof of Proposition \ref{prop_newaep}.] Assuming the tokens $Y_n$ are generated by the auxiliary model $M'_\bS$ (instead of the original model M), the random variables $l_m,~l_n$ are independent and $\E \left[ l_n \right] \equiv H\left( q_n \right)$. Equation (\ref{eq:lln_ln}) then immediately follows from Proposition \ref{prop_classical_aep2}.
\end{proof}

This informal method allows us to treat the martingale property that the formal proof relies on as a quasi-independence condition to reason about the random variables $l_n$ intuitively. As an example, for the auxiliary model $M'_\bS$, from (\ref{eq:lambdadef}, \ref{eq:logdev}), the variances of $l_n,~l_M(\bS_N)$ are respectively $\lambda^2(q_n),~\lambda^2_M(\bS_N)$. We then have the following simple bound on deviations using the Chebyshev's Inequality \cite{markov_chebyshev}:
\begin{align}
\Pr \left( \left| l_M(\bS_N) - h_M(\bS_N) \right| > \alpha \lambda_M(\bS_N) \right) &\leq \frac{1}{\alpha^2},~\forall \alpha>0 \label{eq:cheby}
\end{align}

\subsection{Typical Sets}

We can now formally define the \emph{typical set} $\mcT_M^n(\epsilon) \subset \mcY^n$ of \emph{typical strings} $\bY_n$ for a model M with $\epsilon > 0$:
\begin{align}
\mcT_M^n(\epsilon) &\doteq \left\{ \bY_n: \left| l_M(\bY_n) - h_M(\bY_n) \right| < \epsilon \right\} \equiv \left\{ \bY_n: \left| h_M(\bY_n)+\frac{1}{n} \log_2 \left( \Pr(\bY_n) \right) \right| < \epsilon \right\} \label{eq:typdef} 
\end{align}
Proposition \ref{prop_newaep} asserts that with high probability a stochastic model $M$ is constrained to only output typical strings from this set $\mcT_M^n(\epsilon)$ for any $\epsilon>0$ if we consider sufficiently long strings. In practical applications of the typical set concept, we almost invariably find that the typical set is {\it vanishingly small}. This is what makes the typical set concept so powerful: it identifies the small number of outcomes that actually matter, so we can focus our resources on them efficiently.

We will show that here too, the typical set must be vanishingly small under mild assumptions. Since the vast majority of token strings represent gibberish text, it is trivially true that any LLM that generates coherent text will not generate most token strings. Thus, the simple notion of smallness - that the typical set is a vanishing subset of all token strings - is not very interesting.

We want to show a much stronger result that even if we exclude gibberish sentences and only look at the set of {\it grammatically correct} texts, any practical LLM can only generate a vanishingly small fraction of all possible grammatical texts. This requires additional analytical machinery which we now introduce.

\section{The Grammar Police} \label{sec:typical}

Let $\mcG(n)$ denote a dictionary of ``grammatical'' strings $\bY_n$ of length $n$ and $g(n) \doteq \frac{1}{n} \log_2 |\mcG(n)|$. Since most token strings represent meaningless gibberish, we expect $|\mcG(n)| \ll |\mcY|^n$ or $g(n) \ll \log_2|\mcY|$ for any reasonable notion of grammar. However, we also want our dictionary to be rich enough to include sentences of unlimited length and complexity to avoid trivialities. Formally we require that $g(n) \geq g_{min} > 0$. Beyond this, our analysis below is very general, which means we can define the notion of ``grammatical strings'' to be essentially anything we want.

\subsection{Augmented Typical Sets for Language Models}

We can now define the augmented typical set of the model $M$ for dictionary $\mcG(n)$ as:
\begin{align}
\mcT_{M,G}^n(\epsilon) &\doteq \mcG(n) \cap \mcT_M^n(\epsilon) \equiv \left\{ \bY_n \in \mcG(n): \left| l_M(\bY_n) - h_M(\bY_n) \right| < \epsilon \right\} \label{eq:typ2} 
\end{align}
Let $p_G(n) \doteq \Pr(\bY_n \in \mcG(n))$ be the probability that model $M$ generates a grammatically correct string $\bY_n \in \mcG(n)$. We will assume that $p_G(n) \geq p_G > 0$ i.e. the model generates grammatical strings of any length with a non-zero probability. Informally, the model $M$ knows how to generate grammatically correct sentences when suitably prompted, but we do not require it to be perfect and it may still sometimes generate ungrammatical outputs. The probability that the model $M$ generates a grammatical string $\bY_n \in \mcG(n)$ can be written as:
\begin{align}
\Pr \big(\bY_n \big| \bY_n \in \mcG(n) \big) &= \frac{\Pr \big(\bY_n\big)}{p_G(n)}  \equiv \frac{2^{-n l_M(\bY_n)}}{p_G(n)}  \label{eq:condprob}
\end{align}

We can use (\ref{eq:condprob}) to relate the perplexity of grammatical strings specifically to the entropy analogous to (\ref{eq:ehm}):
\begin{align}
\frac{1}{n} H \left( \Pr(\bY_n) \big| \bY_n \in \mcG(n) \right) &\equiv \frac{\log_2 p_G(n)}{n} + \E \left[ l_M(\bY_n) \big| \bY_n \in \mcG(n) \right]  \leq g(n) \label{eq:ehm2}
\end{align}
where the last inequality holds with equality for a model that selects all grammatical texts with equal probability: $\Pr(\bY_n) \equiv \frac{1}{|\mcG(n)|},~\forall \bY_n \in \mcG(n)$ (for which $p_G(n)=1$). Unlike (\ref{eq:ehm}), this does not tell us anything about the empirical entropy $h_M(\bY_n)$ of individual strings $\bY_n$. We will now show that we can safely ignore high entropy strings.

\begin{proposition} \label{prop_purge}
{\bf Entropies cannot be too large.} The empirical entropy $h_M(\bY_n)$ of any long grammatical string $\bY_n \in \mcG(n)$ generated by the model $M$ is greater than $g(n) \equiv \frac{1}{n} \log_2|\mcG(n)|$ with vanishing probability:
\begin{align}
\lim_{n \rightarrow \infty} & \Pr \big( h_M(\bY_n) > g(n)+\epsilon \big| \bY_n \in \mcG(n) \big) =0,~\forall \epsilon>0
\end{align}
\end{proposition}

\begin{proof}
First we note that for sufficiently large $n$, we must have with high probability $\left| l_M(\bY_n) - h_M(\bY_n) \right| < \frac{\epsilon}{2}$. Then we have:
\begin{align*}
\Pr \big( h_M(\bY_n) > g(n)+\epsilon \big| \bY_n \in \mcG(n) \big) &\leq \frac{1}{p_G(n)} \Pr \big( \mcE_1(n) \big) \leq \frac{1}{p_G} \Pr \big( \mcE_1(n) \big)
\end{align*}
where $\mcE_1(n) \doteq \left\{ \bY_n \in \mcG(n): l_M(\bY_n) > g(n)+\frac{\epsilon}{2} \right\}$. We can then bound this probability as:
\begin{align*}
    \Pr(\mcE_1(n)) &= \sum_{\bY_n \in \mcE_1(n)} \Pr ( \bY_n )
    \leq \sum_{\bY_n \in \mcE_1(n)} 2^{-n(g(n)+\frac{\epsilon}{2})} \equiv \left| \mcE_1(n) \right| 2^{-n(g(n)+\frac{\epsilon}{2})} <  \left| \mcG(n) \right| 2^{-n(g(n)+\frac{\epsilon}{2})} \equiv 2^{-\frac{n \epsilon}{2}}
\end{align*}

Thus we have shown that $\Pr \big( h_M(\bY_n) > g(n)+\epsilon \big| \bY_n \in \mcG(n) \big) < \frac{1}{p_G} 2^{-\frac{n \epsilon}{2}} \rightarrow 0$ as $n \rightarrow \infty$.
\end{proof}

Proposition \ref{prop_purge} allows us to purge high entropy strings out of the typical set while retaining all of its probability mass. Specifically, we define the purged typical set $\mcP_{M,G}^n(\epsilon)$ as:
\begin{align}
\mcP_{M,G}^n(\epsilon) &\doteq \mcG(n) \cap \mcT_M^n(\epsilon) \cap \mcE_1^c(n),~\mathrm{where}~\lim_{n \rightarrow \infty} \Pr \left( \mcP_{M,G}^n(\epsilon) | \bY_n \in \mcG(n) \right) = 1,~\forall \epsilon >0 \label{eq:typdef3}
\end{align}

\subsection{The Essential Smallness of Typical Sets}

For an entropy-maximizing model that chooses all grammatical strings with equal probability, the typical set includes all grammatical strings $\mcP_{M,G}^n(\epsilon) \equiv \mcG(n)$. We now formalize the assumption that practical LLMs do not do this, and for all such models, we will show that the typical set is a vanishingly small subset of $\mcG(n)$.

\begin{assumption} \label{entropy_gap}
{\bf Practical LLMs are not entropy-maximizing.} The empirical entropy $h_M(\bY_n)$ is strictly smaller than $g(n)$ for at least some fraction of grammatical typical strings. Specifically, there is an entropy gap $\Delta g >0$ such that $h_M(\bY_n) \leq g(n) - \Delta g$ for at least some non-zero fraction $\rho > 0$ of typical strings $\bY_n$:
\begin{align}
\left| \mcE_2(n) \right| \geq \rho \left|\mcP_{M,G}^n(\epsilon)\right|,~\mathrm{where}~\mcE_2(n) \doteq \left\{ \bY_n \in \mcP_{M,G}^n(\epsilon): h_M(\bY_n) \leq g(n) - \Delta g \right\} \label{eq:deltah}
\end{align}
\end{assumption}

\begin{proposition} \label{prop_typical}
{\bf Typical sets are small.} Under Assumption \ref{entropy_gap}, the size of the pruned typical set $\mcP_{M,G}^n$ is an exponentially vanishing fraction of all grammatically correct sequences $\mcG(n)$.
\end{proposition}

\begin{proof} 
From Proposition \ref{prop_newaep}, for sufficiently large $n$, we have $|l_M(\bY_n)-h_M(\bY_n)| \leq \frac{\Delta g}{2}$. Using this observation for $\bY_n \in \mcE_2(n)$, we have $l_M(\bY_n) \leq g(n)-\frac{\Delta g}{2}$. Then we have:
\begin{align}
1 &\geq \Pr(\mcE_2(n)) \equiv \sum_{\bY_n \in \mcE_2(n)} 2^{-n l_M(\bY_n)} 
\geq \sum_{\bY_n \in \mcE_2(n)} 2^{-n \left( g(n) - \frac{\Delta g}{2} \right) } \equiv \frac{|\mcE_2(n)|}{|\mcG(n)|} 2^{\frac{n \Delta g}{2}} \geq \frac{\rho |\mcP_{M,G}^n(\epsilon)|}{|\mcG(n)|} 2^{\frac{n \Delta g}{2}}
\end{align}
Thus, we have shown $|\mcP_{M,G}^n(\epsilon)| \leq |\mcG(n)| 2^{-n \left( \frac{\Delta g}{2} -\frac{\log_2 (\rho)}{n} \right)}$.
\end{proof}

The obverse of Proposition \ref{prop_typical} is that almost all grammatically correct strings are non-typical for a given language model M. A string can be non-typical in two different ways depending on whether its perplexity is abnormally high (``under-typical'') or abnormally low (``over-typical''). Formally, we define the set $\mcV_M^n(\epsilon)$ of over-typical strings of length $n$ for a model M as:
\begin{align}
\mcV_M^n(\epsilon) &\doteq \left\{ \bY_n \in \mcG(n): l_M(\bY_n) \leq h_M(\bY_n) - \epsilon < g(n) - \epsilon \right\} \label{eq:otyp}
\end{align}

\begin{proposition} \label{prop_overtypical}
{\bf Over-Typical sets are smaller still.} The size of the over-typical set $\mcV_M^n(\epsilon)$ is an exponentially vanishing fraction of all grammatically correct sequences.
\end{proposition}
\begin{proof}
The argument is similar to the proof of Proposition \ref{prop_typical}:
\begin{align}
1 &\geq \Pr \left( \bY_n \in \mcV_M^n(\epsilon) \right) \equiv \sum_{\bY_n \in \mcV_M^n(\epsilon)} 2^{-n l_M(\bY_n)} \geq \left| \mcV_M^n(\epsilon) \right| 2^{-n(g(n)-\epsilon)} \equiv \frac{\left| \mcV_M^n(\epsilon) \right|}{\left|\mcG(n) \right|} 2^{n \epsilon}
\end{align}
Thus we have $\left| \mcV_M^n(\epsilon) \right| \leq \left|\mcG(n) \right| 2^{-n \epsilon}$ which proves the result.
\end{proof}

For Assumption \ref{entropy_gap} to be violated, we need the perplexities of almost all typical strings to be almost equal; conversely, any model that chooses some strings with higher probabilities than others must satisfy Assumption \ref{entropy_gap} and therefore Proposition \ref{prop_typical}. Together with Proposition \ref{prop_overtypical}, this means that almost all grammatical strings are {\it undertypical} for any practical LLM.

\subsection{Constriction by Staying True to Type} \label{sec:pigeonhole}

We now present some example consequences of Proposition \ref{prop_typical} as a payoff for the preceding analysis. Informally, Proposition \ref{prop_newaep} says that all models must stay ``true to type'', and Proposition \ref{prop_typical} says that this narrowly circumscribes the range of a model's possible behaviors and outputs.

Consider the augmented language model $G=(M,D)$ shown in Fig. \ref{fig:gpolice}, where the output of a model $M$ is screened by a grammar-checking device $D$ which defines the dictionary $\mcG(n)$; if the model's output $\bY$ is not grammatically correct, then it is prompted again to produce a new random string $\bY$ until it generates a grammatical string\footnote{Note that we assume that the device $D$ is deterministic, which means our analysis does not allow the grammar checking device to be another stochastic LLM.}. The construction in Fig. \ref{fig:gpolice} implicitly assumes a \emph{decidable} grammar, which means it describes a ``recursive language'' in Type 1 of the Chomsky hierarchy - the same category natural languages are believed to belong to \citep{chomsky2014aspects}.

\begin{figure}
    \centering
    \includegraphics[height=1.7cm]{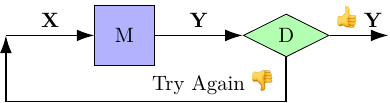}
    \caption{Language Model Augmented with a Grammar Checking Device.}
    \label{fig:gpolice}
\end{figure}

Since $p_G(n)>p_G>0$, the augmented model $G$ is guaranteed to generate a grammatical string in a finite time with probability one and the probability that model $G$ generates a grammatical string $\bY_n$ is $\Pr(\bY_n | \bY_n \in \mcG(n))$. Furthermore, we see from (\ref{eq:typdef3}) that the augmented model $G$ will only generate outputs from the set $\mcP_{M,G}^n(\epsilon)$ with high probability. We can then interpret the Proposition \ref{prop_typical} as a restriction on what the augmented model $G$ can generate as illustrated in the following examples:


\begin{enumerate}[label=Ex \arabic*]

\item {\bf The Python programming language.} We define $\mcG(n)$ as the set of all syntactically correct Python code that compiles (or interprets) without error. While we know LLMs are capable of generating complex Python code, Proposition \ref{prop_typical} asserts that any model is limited to generating a vanishing fraction of all valid Python programs.

\item {\bf Proofs in First Order Logic (FOL).} We define $\mcG(n)$ as the set of valid mathematical proofs in FOL. A model used to generate proofs in FOL is limited to generating a vanishing fraction of all correct proofs.

\item {\bf Lyrical Poetry.}  We define $\mcG(n)$ as the set of stories consisting of a sequence of haikus. A model is limited to a vanishing fraction of all such possible stories.
  
\end{enumerate}

\section{Experiments with Open-Source Models} \label{sec:exp}

We performed a series of experiments to verify the ideas described in Sections \ref{sec:aep} and \ref{sec:typical}. The basic idea is to use a local instance of a pre-trained open-source model to evaluate the empirical entropy of a given string using the model's conditional probability distributions and compare the result with the log-perplexity of the string again calculated using the model's probability distributions. We used the smaller GPT-2 model \citep{osllm} as well as the larger and more capable Llama 3.1 8B \citep{llama} model. 

\subsection{Perplexity of Self-Generated Strings} \label{sec:expgen}

Our first set of experiments consisted of repeatedly generating a long text string from a model (GPT-2 or Llama 3.1 8B) and then evaluating its log-perplexity, empirical entropy and log-deviations for the same model.

We initialized the prompt string and the seed of random number generator with fixed values to allow for later reproduction. We then used top-k sampling with $k=100$ to generate the probability distribution for the next token. We normalized the probabilities of the top-k tokens to sum to unity. We then created and saved a table consisting of token ID, token string and selection probability of each of the top $k$ tokens. Finally, a random next token is chosen by sampling from the normalized distribution, and the new token is appended to the prompt to repeat the process for $N_{max}$ total tokens. We want the number of tokens $N_{max}$ to be large enough to observe the asymptotic behavior of the perplexity of the generated string.

The random token $Y_n$ chosen at each step along with the probability distribution $p_n(y)$ from which they were chosen were saved into a json file for later processing. We ran this experiment many times for different seed values for the random number generator, different choices of sampling distributions and initial prompts.

For each generated string, we can then test our claim in Proposition \ref{prop_newaep}. Specifically, for each token $N \in 1 \dots N_{max}$ we calculated the log-perplexity $l_M(\bY_N)$, empirical entropy $h_M(\bY_N)$ and the log-deviation $\lambda_M(\bY_N)$ of the substring $\bY_N$ consisting of the first $N$ tokens of the generated string.

\subsection{Perplexity of Externally Generated Strings} \label{sec:genstr}

We also used a slightly modified version of the procedure described in Section \ref{sec:expgen} to calculate the log-perplexity and empirical entropy of an arbitrary string (that was not generated by the model itself) on the GPT-2 and Llama 3.1 8B models. We first choose a text string that we wish to analyze. We fix an initial fragment of the string as our prompt $\bX$. We tokenize the remaining portion of the string excluding the prompt using the model's tokenizer. Let $Y_n,~n=1 \dots N_{max}$ denote the resulting sequence of tokens.

Stating with $n=1$, we list the probability distribution $p_n(y)$ of the next token $Y_n$ for the model using top-k with $k=100$ sampling for the $n$'th token. We can then calculate the empirical entropy $h_M(\bY_N)$ and log-deviation $\lambda_M(\bY_N)$ as in Section \ref{sec:expgen}. However, to calculate the log-perplexity $l_M(\bY_N)$, we also need the probability that our model will output the actual next token $Y_n$ from the string $\bY$ which may not be included in the list of top-$k$ tokens\footnote{Our choice of top-k sampling is a compromise. The number of tokens in GPT-2 or Llama is quite large ($|\mcT| \approx 40,000$). If we exhaustively list the full probability distribution of all tokens, the experiment becomes very computationally expensive and slow.}. When this happens, we replaced the lowest probability entry in the list of top-k tokens with the actual next token from our string along with its probability.

The normalized probability of each of the top-k tokens along with their token-ID and token strings, as well as the probability and ID of the actual next token are saved in a json file. This json file is then processed in exactly the same way as in Section \ref{sec:expgen} to compute $l_M(\bY_N),~h_M(\bY_N)$ and $\lambda_M(\bY_N)$ of the sub-strings $\bY_N,~N=1 \dots N_{max}$ for the GPT-2 or Llama 3.1 8B model.

\subsection{Results and Discussion} \label{sec:results}

\begin{figure}
    \centering
    \begin{subfigure}{0.35\columnwidth}
    \includegraphics[width=0.95\columnwidth]{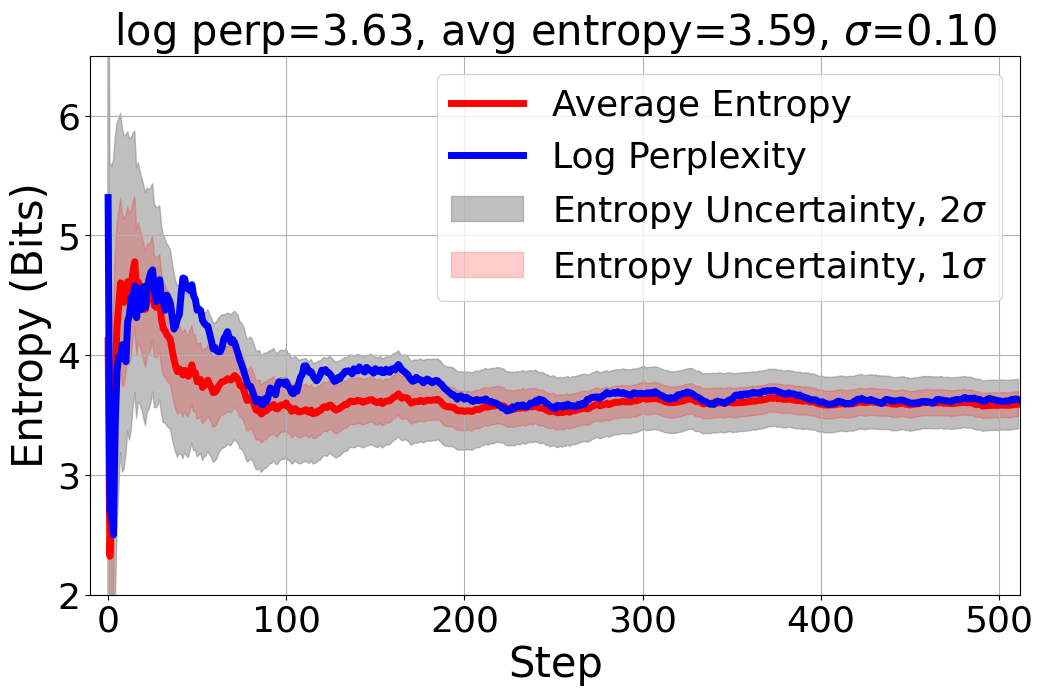}
    \caption{GPT-2, $N_{max}=512$.}
    \label{fig:gpt2_good1}
    \end{subfigure}
    \begin{subfigure}{0.35\columnwidth}
    \includegraphics[width=0.95\columnwidth]{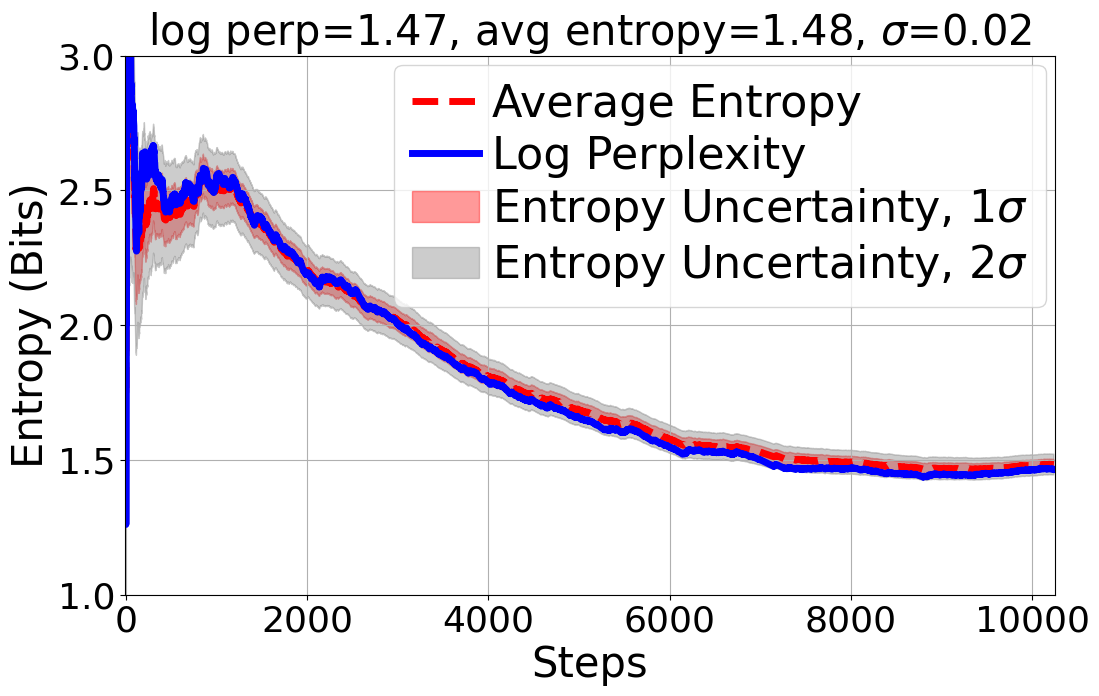}
    \caption{Llama 3.1 8B, $N_{max}=10240$.}
    \label{fig:long1}
    \end{subfigure}

    \caption{Log-perplexity \& empirical entropy for GPT-2 or Llama 3.1, self-generated text, top-100 sampling.}    
    \label{fig:short_long_strings}
\end{figure}

Figure \ref{fig:short_long_strings} shows the log-perplexity and empirical entropy for sub-strings $\bY_N$ for the GPT-2 or Llama 3.1 8B model as a function of $N$ for several text strings generated by the same model. The plots also shows the range $h_M(\bY_N) \pm \lambda_M(\bY_N)$ and $h_M(\bY_N) \pm 2 \lambda_N(\bY_N)$. We see that the log-perplexity $l_M(\bY_N)$ stays within the range $h_M(\bY_N) \pm 2\lambda_M(\bY_N)$ consistently in both cases. The asymptotic convergence is more clearly seen in the long string from the Llama 3.1 8B model\footnote{The GPT-2 model cannot generate similarly long texts.} in Fig. \ref{fig:long1}. Additional plots showing the asymptotic behavior are presented in Appendix \ref{sec:more_exp}. However, we find that very long synthetic texts such as those in Fig. \ref{fig:long1} can be ungrammatical and generally of poor quality, as reported elsewhere \cite{llm_text_degen}. 

While Proposition \ref{prop_newaep} holds regardless of the quality of the text, its practical relevance depends on whether the predicted asymptotic behavior is observable in normal-length texts. Figure \ref{fig:scatter} shows the log-perplexity $l_M(\bY_N)$ against the empirical entropy $h_M(\bY_N)$ for the Llama-3.1 8B model for several text strings generated by the same model. A total of $450$ strings were generated using top-$100$ sampling from $15$ distinct prompts and $30$ different random seeds for each prompt. Of these, abnormally short strings of less than $100$ tokens were excluded from the analysis. The remaining $383$ strings had an average length of $685$ tokens and a standard deviation of $321$. The clustering of the log-perplexity around the $45^\circ$ line in Fig. \ref{fig:scatter} provides direct visual affirmation for Proposition \ref{prop_newaep}. 

\begin{figure}
    \centering
    \includegraphics[height=7cm]{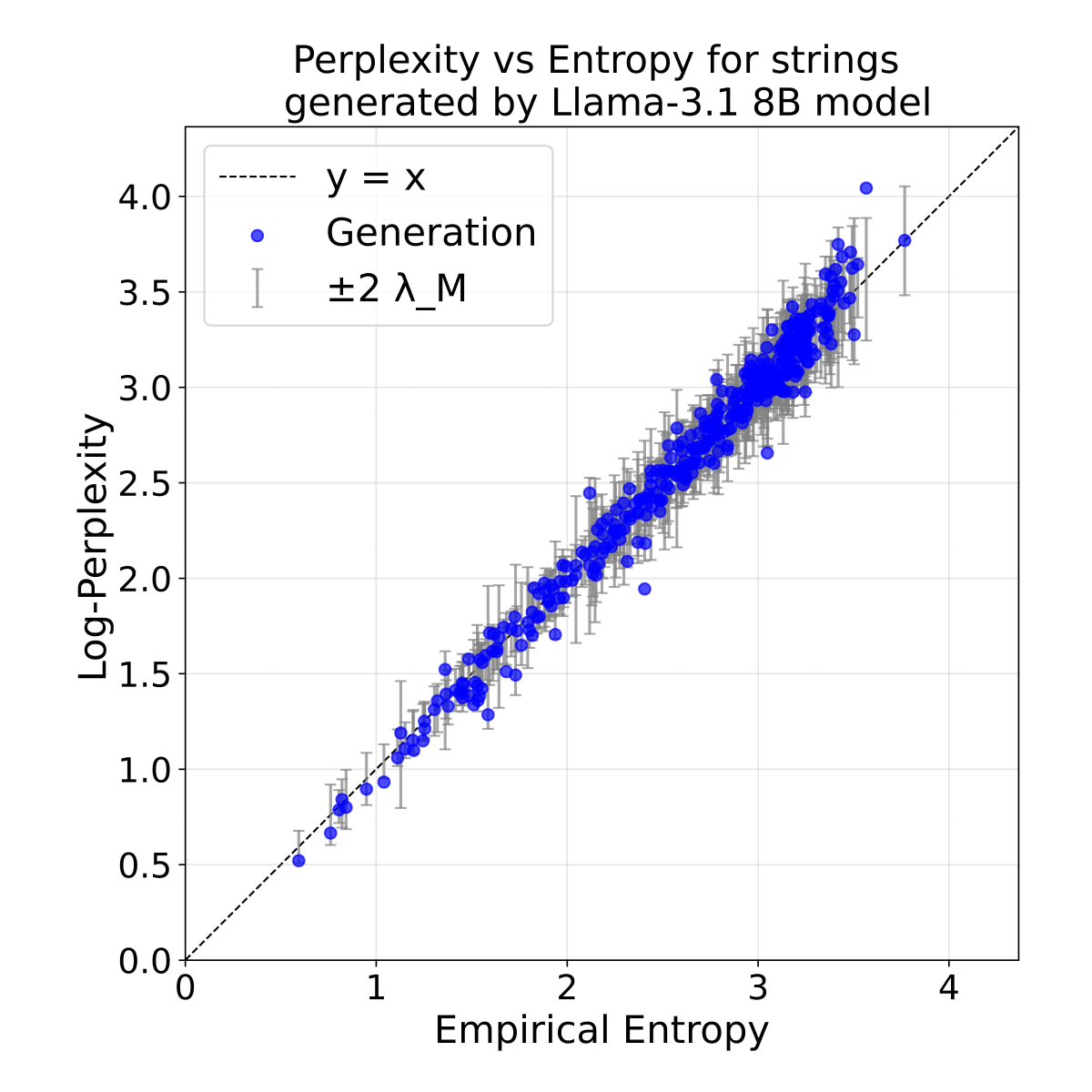}
    \caption{Log-perplexity \& empirical entropy of ``normal-length'' text generated by Llama-3.1 8B.}
    \label{fig:scatter}
\end{figure}

The fact that the perplexity of the generated strings vary over a wide range in Fig. \ref{fig:scatter} also provides direct evidence for the Assumption \ref{entropy_gap} that the Llama-3.1 8B model is not entropy-maximizing and therefore must have vanishingly small typical sets per Proposition \ref{prop_typical}. Additional evidence for Propositions \ref{prop_typical} and \ref{prop_overtypical} is in Fig. \ref{fig:external} that analyses two long, non-Llama generated strings. The text used in Fig. \ref{fig:human} is an excerpt from a recent article \cite{Pelly_2025} in Harper's Magazine, a respected literary publication. The text in Fig. \ref{fig:o1pro} was generated by ChatGPT o1 Pro~\cite{o1pro}. We see that the log-perplexity $l_M(\bY_N)$ for Llama 3.1 8B is several standard deviations $\lambda_M(\bY_N)$ larger than the empirical entropy $h_M(\bY_N)$ for both strings in Fig. \ref{fig:external} showing that these strings are clearly {\it under-typical} for the Llama 3.1 8B model. If we assume that the strings in Fig. \ref{fig:external} are randomly sampled from the set of all grammatically correct texts, then this under-typicality is a simple consequence of the smallness of the Llama-3.1 8B model's typical sets.

\begin{figure}
    \centering
    \begin{subfigure}{0.4\columnwidth}
    \includegraphics[width=0.95\columnwidth]{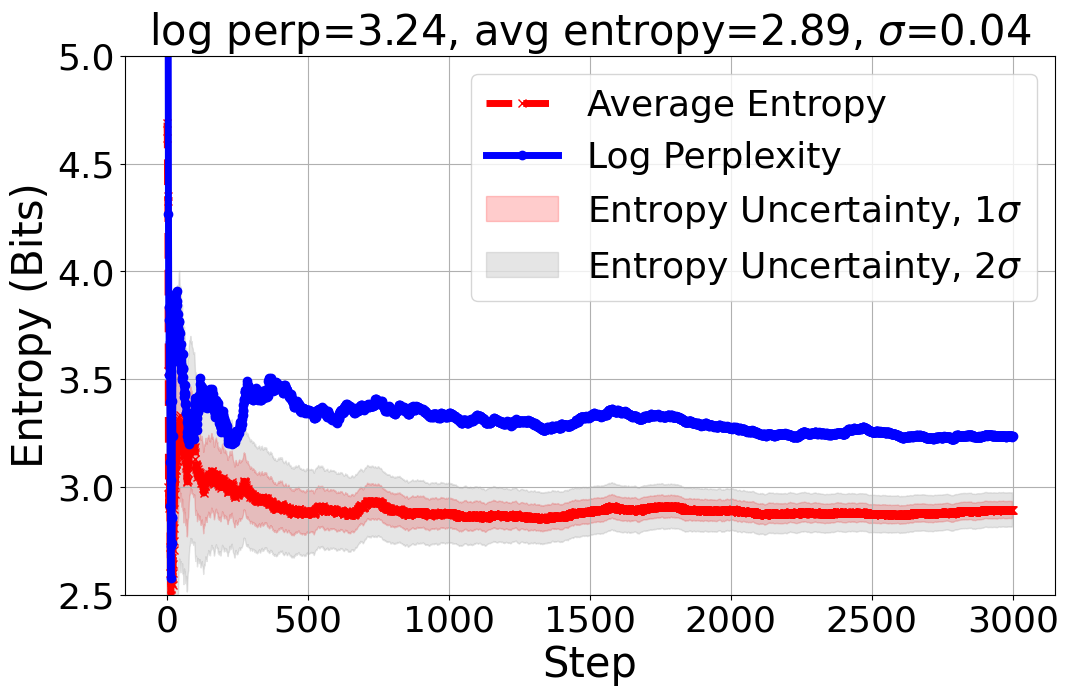}
    \caption{Human writing.}
    \label{fig:human}
    \end{subfigure}
    \begin{subfigure}{0.4\columnwidth}
    \includegraphics[width=0.95\columnwidth]{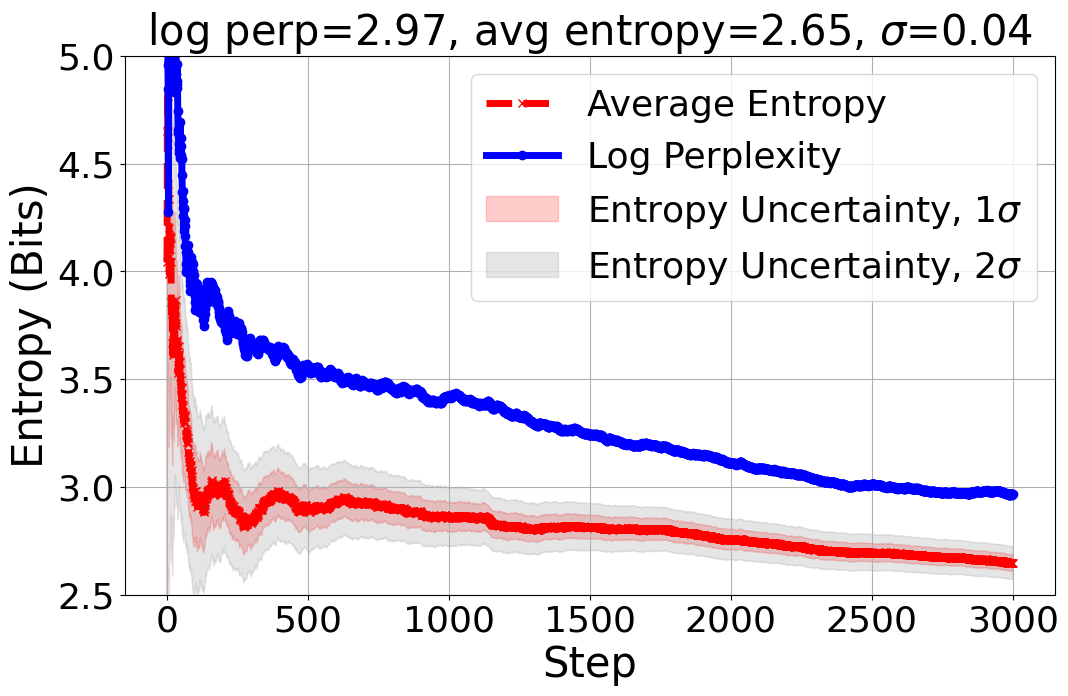}
    \caption{Text from ChatGPT o1 pro.}
    \label{fig:o1pro}
    \end{subfigure}
    \caption{Log-perplexity and empirical entropy for Llama 3.1 8B of non-Llama-generated strings $\mathbf{Y}$.}    
    \label{fig:external}
\end{figure}

\subsection{More Catholic than the Pope}

This brings us to the remarkable plots in Fig. \ref{fig:classic} which shows the log-perplexity of excerpts from classic texts for Llama 3.1 8B, specifically from Alice in Wonderland and The Great Gatsby in Figs. \ref{fig:alice} and \ref{fig:gatsby} respectively. The average entropy for the token distributions in Fig. \ref{fig:alice} is an astonishingly small $0.11$ bits, which means that the model on average assigns a very high probability ($\sim 99\%$) to the precise sequence of tokens from the excerpt for its predicted next tokens. In other words, {\it this excerpt has been memorized by the Llama 3.1 8B model} in the sense of eidetic memorization \cite{llm_memorization}: with zero temperature, it will output this text verbatim. The same observation also applies to Fig. \ref{fig:gatsby}.

We also see from both plots that the log-perplexity is several standard deviations \emph{below} the empirical entropy making the excerpts slightly, but unambiguously, {\it over-typical} for the Llama-8B model: the model judges these texts to be more perfect than its own creations! From Proposition \ref{prop_overtypical}, we can conclude that these excerpts are among the small number of very special strings to be so memorized by the model. Clearly, these texts were part of the training set of the Llama-8B model and they were esteemed highly by its cost function. 

\begin{figure}
    \centering
    \begin{subfigure}{0.4\columnwidth}
    \includegraphics[width=0.95\columnwidth]{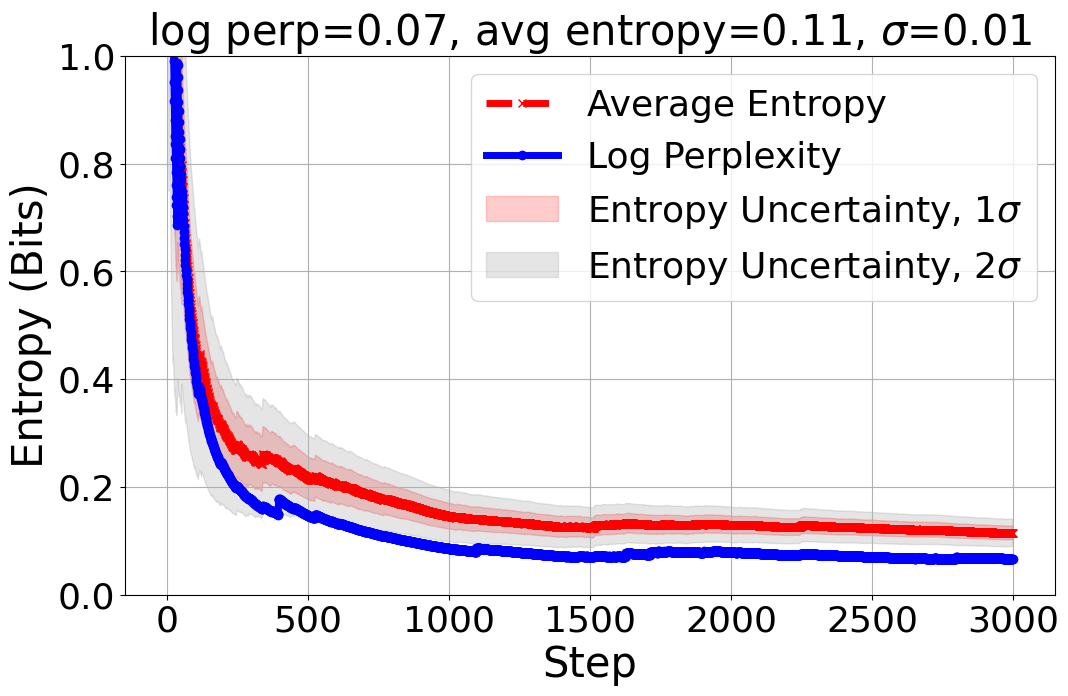}
    \caption{Alice in Wonderland excerpt.}
    \label{fig:alice}
    \end{subfigure}
    \begin{subfigure}{0.4\columnwidth}
    \includegraphics[width=0.95\columnwidth]{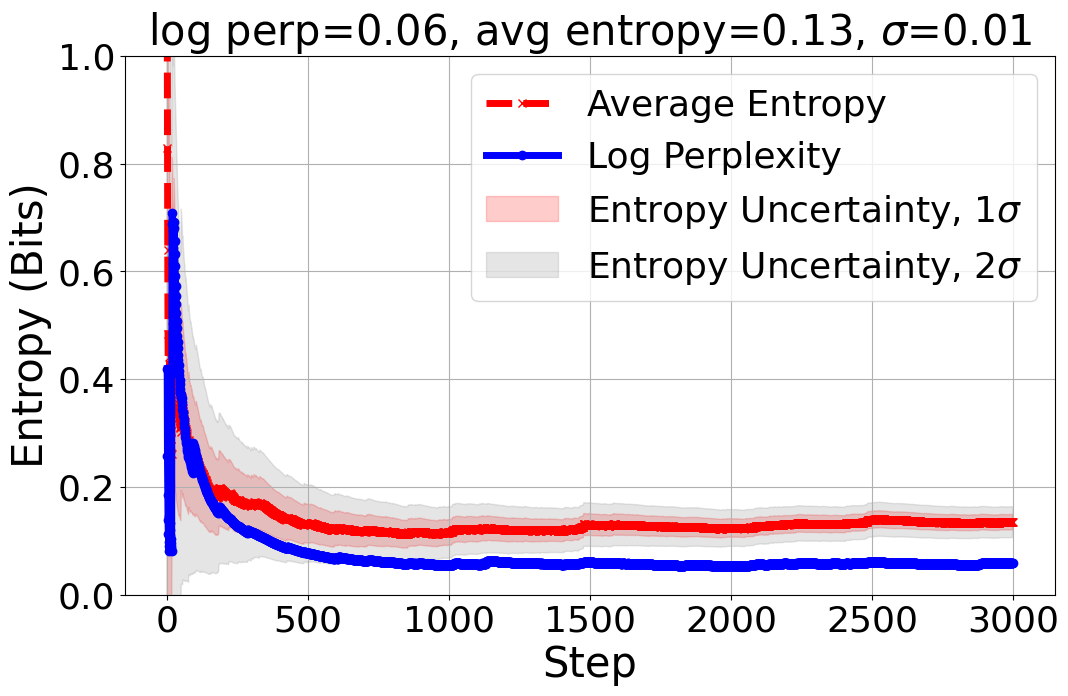}
    \caption{The Great Gatsby excerpt.}
    \label{fig:gatsby}
    \end{subfigure}
    \caption{Log-perplexity and empirical entropy for Llama 3.1 8B of excerpts from literary classics.}    
    \label{fig:classic}
\end{figure}

In summary, we see from Figs. \ref{fig:short_long_strings} and \ref{fig:scatter} that synthetic texts generated by an LLM are {\it typical} for the model, while most external texts such as in Fig. \ref{fig:external} are {\it undertypical}. By definition, {\it over-typical} strings are very special such as the literary classics in Fig. \ref{fig:classic} that have been memorized by the model.

\section{Practical applications: the capybara problem} \label{sec:app}

A simple and direct application of the ideas in this work is for testing whether a given piece of text has ``abnormally'' high or low perplexity for a language model. Such a test is relevant to many important practical problems. Low perplexity is commonly used as one marker of synthetic text in AI text detection tools \citep{wu2025survey} by testing $l_M(\bY_n) \substack{ < \\ > } l_{th}$ against a threshold $l_{th}$. However, it turns out that choosing a suitable classification threshold $l_{th}$ is difficult - the so-called ``capybara problem'' \citep{hans2024spottingllmsbinocularszeroshot}. 

Our theory shows a principled way to do this: with $l_{th}=h_M(\bY_n)+3 \lambda_M(\bY_n)$, Fig. \ref{fig:scatter} shows that we can accurately distinguish between synthetic text generated by Llama 3.1 8B and non-Llama text such as in Fig. \ref{fig:external} even with relatively short strings of $\approx 500$ tokens. The key insight is to choose \emph{variable} rather than fixed thresholds, i.e. different $l_{th}$ for different strings $\bY_n$.

Our theory can also provide some performance guarantees: e.g. a simple upper-bound on the false negative rate (i.e. failing to detect a Llama-generated string) can be obtained from (\ref{eq:cheby}). Note, however, that we cannot provide bounds on {\it false positive} because the converse to Proposition \ref{prop_newaep} does not hold in general without additional assumptions.

Similar comments apply to the dataset inference problem \citep{maini2024llmdatasetinferencedid}: while it is well-known \citep{llm_memorization} that text strings memorized by an LLM have low perplexity, our theory allows us to rigorously derive threshold values and performance guarantees. Extrapolating from Fig. \ref{fig:classic}, we posit that \emph{if a long excerpt from a human-written book is over-typical for a language model, it is a near-statistical certainty that the book was part of the training data set for that model.}

\section{Conclusions}

We proposed an asymptotic property that must be satisfied by the perplexity of any long string generated by a language model and provided theoretical and experimental arguments in support of the property. We used this property to extend the information theoretic concept of typical set to language models. The main take-aways are as follows:

\begin{enumerate}

\item The classical AEP and the concept of typical sets can be generalized to apply to the outputs of LLMs.

\item These typical sets can be shown to be a vanishingly small subset of possible ''good'' outputs under very general conditions and this means that LLMs are strongly constrained in what they can generate.

\item Experimentally we see that the asymptotic behavior in the AEP can be observed for LLMs for moderate length texts (a few hundred tokens), and thus may be practically relevant to detecting texts of abnormally high or low perplexity.

\item Synthetic texts from LLMs are random processes with known statistics (unlike natural text), and discovering other laws (e.g. Central Limit Theorems) that apply to LLM outputs is an open research question.

\end{enumerate}


\clearpage

\bibliographystyle{tmlr}
\balance
\bibliography{refs}

\newpage

\appendix
\onecolumn

\section{A Generalized Law of Large Numbers for Random Variables with Arbitrary Dependence} \label{sec:wlln}

Consider a infinitely long string $\bY = [Y_1~Y_2~\dots~Y_n~\dots]$ consisting of a sequence of symbols $Y_n \in \mcY$ generated stochastically from a finite dictionary $\mcY$. We will denote the substring consisting of the first $N$ symbols as $\bY_N \equiv [Y_1~Y_2~\dots Y_N]$ and the joint distribution of the symbols $P(\bY_N)$. We make no assumptions about the distribution $P(\bY_N)$ from which the symbols $Y_n$ are generated. In particular, we do not assume that the symbols are independent or weakly dependent even asymptotically. We also do not assume stationarity i.e. the marginal distributions of symbols $Y_n$ and $Y_m$ can be completely different. We also do not assume any Markovian or conditional independence properties.

\subsection{Conditional Means and Sample Means for Sequences of Dependent Random Variables}

Consider a sequence of random variables $\bZ = [Z_1~Z_2~\dots~Z_n~\dots]$ whose $n$'th element $Z_n$ is a function of the first $n$ symbols $Y_1,Y_2, \dots, Y_n$ i.e. $Z_n \doteq g_n(\bY_n)$ and $g_n: \mcY^n \rightarrow \mathbb{R}$ are functions that we will assume to be uniformly bounded i.e. $|g_n(\bY_n)| \leq G < \infty,~ \forall n \in \mathbb{Z}^+, \bY_n \in \mcY^n$.

We define the conditional mean and conditional variance of the random variables $Z_n$:
\begin{align}
\mu_n^{(\bY)} & \doteq \sum_{y \in \mcY} p^{(\bY)}_n(y) g_n \left( [\bY_{n-1}~y] \right) \equiv \E \left[ Z_n | \bY_{n-1} \right] \label{eq:mun} \\
V_n^{(\bY)} & \doteq \sum_{y \in \mcY} p^{(\bY)}_n(y) \left( g_n \left( [\bY_{n-1}~y] \right) - \mu_n^{(\bY)} \right)^2 \equiv \E \left[ Z_n^2 | \bY_{n-1} \right] - \left( \mu_n^{(\bY)} \right)^2 \label{eq:sigman}
\end{align}
where the distribution $p^{(\bY)}_n(y) \doteq \Pr(Y_n=y|\bY_{n-1})$. Since the random variables $Z_n$ are discrete-valued and bounded, they have bounded moments for any distribution:
\begin{align}
|\mu_n^{(\bY)}| \leq G,~ & V_n^{(\bY)} \leq G^2 \label{eq:bounds}
\end{align}

{\bf A more formal statement.} Let $\mcF_n$ be the $\sigma$-algebra defined by $\bY_n \equiv [Y_1~Y_2~\dots~Y_n]$ i.e. $\mcF_n$ is a collection that includes all (mathematically reasonable) events defined on the first $n$ symbols. Then the random variables $Z_n$ as we have defined them are defined by the condition that $Z_m$ is $\mcF_m$-measurable, and $\mu_m^{(\bY)} \equiv \E \left[ Z_m \big| \mcF_{m-1} \right]$.

We need some intermediate results before presenting our main result. First, we define the partial sums of the sequence $Z_n$ and conditional means:
\begin{align}
S_n &\doteq \sum_{m=1}^n Z_m,~\mu_{S_n}^{(\bY)} \doteq \sum_{m=1}^n \mu_m^{(\bY)} \label{eq:meansum}
\end{align}
Clearly, $\E \left[ Z_n \right] \equiv \E \left[ \mu_n^{(\bY)} \right]$ and $\E \left[ S_n \right] \equiv \E \left[ \mu_{S_n}^{(\bY)} \right]$. The key idea is that the sequence $Z_m-\mu_m^{(\bY)}$ is a martingale difference sequence, and $S_n-\mu_{S_n}^{(\bY)}$ is a martingale for any arbitrary distribution of the $Z_m$'s.

\begin{lemma} \label{ortho}
The variance of the zero-mean random variable $S_n-\mu_{S_n}^{(\bY)}$ can be written as sums of contributions from each of the $Z_m$'s. Specifically, we have:
\begin{align}
V_{S_n} &\doteq \E \left[ \left( S_n - \mu_{S_n}^{(\bY)} \right)^2 \right] = \sum_{m=1}^n \E \left[ V_m^{(\bY)} \right] \label{eq:varsum}
\end{align}
\end{lemma}

\begin{proof}
We will use the method of induction to prove (\ref{eq:varsum}). For $n=1$, (\ref{eq:varsum}) holds trivially. Now, let's assume (\ref{eq:varsum}) holds for $n=l$ i.e. assume that:
\begin{align}
V_{S_l} &\equiv \sum_{m=1}^l \E \left[ V_m^{(\bY)} \right] \label{eq:indc1}
\end{align}

Now consider $S_{l+1} \equiv S_l + Z_{l+1}$. Its variance can be written as:
\begin{align}
V_{S_{l+1}} &= \E \left[ \left( S_l+Z_{l+1} - \mu_{S_l}^{(\bY)} - \mu_{l+1}^{(\bY)} \right)^2 \right] \\
    &= V_{S_l} + \E \left[ V_{l+1}^{(\bY)} \right] +2 \E \left[ \left( S_l-\mu_{S_l}^{(\bY)} \right) \left( Z_{l+1}-\mu_{l+1}^{(\bY)} \right) \right] \label{eq:cov}
\end{align}

We will now show that the last term in (\ref{eq:cov}) is zero. We apply the law of iterated expectations to write:
\begin{align}
&\E \left[  \left( S_l-\mu_{S_l}^{(\bY)} \right) \left( Z_{l+1}-\mu_{l+1}^{(\bY)} \right) \right] = \E_{\bY_l} \left[ \E \left[  \left( S_l-\mu_{S_l}^{(\bY)} \right) \left( Z_{l+1}-\mu_{l+1}^{(\bY)} \right)  \big| \bY_l \right] \right] \label{eq:cond1} \\
&= \E_{\bY_l} \left[  \left( S_l-\mu_{S_l}^{(\bY)} \right) \E \left[ \left( Z_{l+1}-\mu_{l+1}^{(\bY)} \right) \big| \bY_l \right]  \right] \equiv 0 \label{eq:cov2}
\end{align}
where we used the fact that $S_l-\mu_{S_l}^{(\bY)}$ is a deterministic function of $\bY_l$ and $\mu^{(\bY)}_{l+1} \equiv \E \left[ Z_{l+1} | \bY_l \right]$.

\emph{Remark.} An alternative way to arrive at (\ref{eq:cov2}) is to treat $\mu_{l+1}^{(\bY)} \equiv \E \left[ Z_{l+1} \big| \bY_l \right]$ as an estimate of $Z_{l+1}$ given $\bY_l$, and then use the Orthogonality Principle to argue that the estimation error $\left( Z_{l+1}-\mu_{l+1}^{(\bY)} \right)$ must be uncorrelated with any function of $\bY_l$. More formally, $\left( S_l-\mu_{S_l}^{(\bY)} \right)$ is $\mcF_l$-measurable and is therefore orthogonal to $\left( Z_{l+1}-\E \left[ Z_{l+1} | \mcF_l \right] \right)$.

Using (\ref{eq:cov2}), we can now simplify (\ref{eq:cov}) to:
\begin{align}
V_{S_{l+1}} &= V_{S_l} + \E \left[ V_{l+1}^{(\bY)} \right] \equiv \sum_{m=1}^{l+1} \E \left[ V_m^{(\bY)} \right] \label{eq:cov3}
\end{align}
where we used the induction assumption (\ref{eq:indc1}). We have shown that if (\ref{eq:varsum}) holds for $n=l$, then (\ref{eq:varsum}) must also hold for $n=l+1$, thus completing the induction.
\end{proof}

\begin{lemma} \label{zmlln}
The sequence $\frac{1}{n} \left( S_n - \mu_{S_n}^{(\bY)} \right)$ converges to zero in probability:
\begin{align}
\lim_{n \rightarrow \infty} \Pr \left(\frac{1}{n} \left| S_n - \mu_{S_n}^{(\bY)} \right| > \epsilon  \right) = 0,~\forall \epsilon > 0 \label{eq:wmlln}
\end{align}
\end{lemma}
\begin{proof}
Using Lemma \ref{ortho}, we can write:
\begin{align}
V_{S_n} &= \sum_{m=1}^n \E \left[ V_m^{(\bY)} \right] \leq n G^2
\end{align}
using the simple bounds in (\ref{eq:bounds}). The variance of $\frac{1}{n} \left( S_n - \mu_{S_n}^{(\bY)} \right)$ is simply $\frac{1}{n^2} V_{S_n} \leq \frac{G^2}{n} \rightarrow 0$ as $n \rightarrow \infty$. Equation (\ref{eq:wmlln}) now follows immediately using the Chebyshev Inequality \citep{markov_chebyshev}.
\end{proof}

Lemma \ref{zmlln} states that the sample means of the two random sequences $\{Z_n\}$ and $\{ \mu^{(\bY)}_n \}$ must be close to each other asymptotically. Since we have made very few assumptions about the random variables involved, we cannot assume that either of the sequences $\{ Z_n\}$ or $\{ \mu^{(\bY)}_n \}$ themselves converge in any sense to an asymptotic limit. Neither can we assume that their sample means separately converge to a limit.

With an additional assumption stating the existence of one such limit, we can establish a Weak Law of Large Numbers generalized to its most extreme i.e. a statement that the sample mean of a sequence of random variables $\{ Z_n \}$ with arbitrary dependence converges asymptotically to a suitably defined average quantity in probability.


\begin{proposition} \label{prop:wlln}
{\bf A Weak Law of Large Numbers.} If the average of the conditional means converges to a limit i.e. $\mu^{(\bY)} \equiv \lim\limits_{n \rightarrow \infty} \frac{1}{n} \sum\limits_{m=1}^n \mu_m^{(\bY)}$ exists, then the sequence $\frac{1}{n} S_n$ converges in probability to $\mu^{(\bY)}$:
\begin{align}
\frac{1}{n} \sum_{m=1}^n Z_n &~\substack{ \rightarrow \\ p } ~\mu^{(\bY)} 
~~\mathrm{or}~\lim_{n \to \infty} \Pr \left( \left| \frac{1}{n} S_n - \mu^{(\bY)} \right| > \epsilon  \right) = 0,~\forall \epsilon > 0
\end{align}
\end{proposition}

\begin{proof}
Combining Lemma \ref{zmlln} with the observation $\frac{1}{n} \mu_{S_n}^{(\bY)} \rightarrow \mu^{(\bY)}$ completes the proof.
\end{proof}







\end{document}